\definecolor{gray}{rgb}{0.5,0.5,0.5}
\definecolor{mauve}{rgb}{0.58,0,0.82}
\def \d {\textbf{d}}
\DeclareMathOperator*{\argmin}{arg\,min}
\DeclareMathOperator*{\argmax}{arg\,max}
\newtheorem{theorem}{Theorem}
\title{Multiscale Residual Mixture of PCA:\\Dynamic Dictionaries for Optimal Basis Learning}
\author{
  Randall BALESTRIERO \\
  ECE Department\\
  Rice University\\
  HOUSTON, TX\\
  \texttt{randallbalestriero@gmail.com} \\
}
\begin{document}

%

\maketitle

\begin{abstract}
  In this paper we are interested in the problem of learning an over-complete basis and a methodology such that the reconstruction or inverse problem does not need optimization. We analyze the optimality of the presented approaches, their link to popular already known techniques s.a. Artificial Neural Networks,k-means or Oja's learning rule. Finally, we will see that one approach to reach the optimal dictionary is a factorial and hierarchical approach. The derived approach lead to a formulation of a Deep Oja Network. We present results on different tasks and present the resulting very efficient learning algorithm which brings a new vision on the training of deep nets. Finally, the theoretical work shows that deep frameworks are one way to efficiently have over-complete (combinatorially large) dictionary yet allowing easy reconstruction. We thus present the Deep Residual Oja Network (DRON). We demonstrate that a recursive deep approach working on the residuals allow exponential decrease of the error w.r.t. the depth.
\end{abstract}

\section{Introduction}
We first discuss briefly the problem of dictionary learning and the different choices one has to make to perform this task depending on the problem at hand. We also introduce notations and standard operators.
\subsection{Why a Dictionary ?}

In order to analyze a given finite dataset $X:=\{x_n \in \mathbb{R}^D,n=1,..,N\}$ different approaches are possible. One of them lies on the assumption that those observations actually come from some latent representation that are mixed together, different mixing leading to different observations but with fixed dictionary $\Phi_K:=\{\phi_k \in \mathbb{R}^D,k=1,...,K\}$ with usually $K \ll N$. One can thus rewrite 
\begin{equation}\label{eq0}
    x_n = f(\Phi,x_n).
\end{equation}
In practice, a linear assumption is used for $f$ and we write the prediction as 
\begin{equation}
    \hat{x}_n=\hat{f}(\hat{\Phi}_K,x_n),
\end{equation}
where one estimate the functional and the filter-bank through a reconstruction loss defined as
\begin{equation}
    E_n = ||x_n-\hat{x}_n ||^2,
\end{equation}
where we assume here a squared error but any loss function can be used in general.
The linear assumption imposed on $f$ leads to the estimation weighting coefficients we denote by $\alpha_{n,k}$ weighting for observation $n$ the atom $k$, and those attributes are the new features used to represent the observations.

This new representation of $x_n$ via its corresponding feature vector $\bm{\alpha_n}$ can be used for many tasks s.a. clustering, denoising, data generation, anomaly detection, compression and much more.
Depending on the constraints one imposes on the feature vectors $\bm{\alpha_n}$ and the dictionary $\Phi_K$, one can come back to standard known frameworks s.a. Principal Component Analysis (PCA)\cite{jolliffe2002principal}, Independent Component Analysis (ICA)\cite{hyvarinen2004independent}, Sparse Coding (SC)\cite{olshausen1997sparse}, (Semi) Nonegative Matrix Factorization (sNMF)\cite{lee1999learning,ding2010convex},  Gaussian Mixture Model (GMM)\cite{bilmes1998gentle}, and many more, but all those approaches can be categorized into two main categories: Complete Dictionary Learning and Over-Complete Dictionary Learning.

The use of a dictionary also extends to standard template matching, the mot popular technique and optimal in the GLRT sense, where new examples are to be mapped to one of the template to be clustered or else.

\subsection{(Over)complete Dictionary Learning}
As we saw, the dictionary learning problem finds many formulations but the main difference resides in the properties of the learned dictionary, namely if it is complete or over-complete. 
The general case of complete or orthogonal basis imposes the following constraint on the filter-bank:
\begin{align}
    <\phi_j,\phi_k>=0,\forall j \not = k,\;K=D,
\end{align}
and with sometimes the complementary constraints that $||\phi_k||=1,\forall k$ leads to an orthonormal basis. The orthogonality of the atoms allow to have exact reconstruction leading to $E_n=0,\forall n$ as by definition one has 
\begin{equation}\label{eq1}
    \hat{x}_n=\sum_i \frac{<x_n,\hat{\phi}_k>}{||\hat{\phi}_k||^2}\hat{\phi}_k,\forall n.
\end{equation}
Given a dictionary $\Phi$ this decomposition is unique and thus we have $(\hat{\Phi}_K,x_n) \Rightarrow \hat{\bm{\alpha}}_n,\forall n$.
However, while through the Gram-Schmidt process existence of such a basis is guaranteed, it is not unique.

On the other hand, $\Phi_K$ can be an over-complete basis with the main difference coming that now $K>D$ and with the extreme case of $K=N$. Containing more atoms than the dimension of the space leads to interesting properties in practice such as sparse and independent features (ICA), clustering properties (K-means, NMF), biological understanding as is has been empirically shown that visual and auditory cortex of many mammals contains over-complete dictionaries.
Yet, all those benefits coming from the redundancy of the atoms also lead to the non-unique features $\bm{\alpha}_n$ even when the dictionary is kept fixed, thus we have that $(\hat{\Phi}_K,x_n) \not \Rightarrow \hat{\bm{\alpha}}_n$.
As a result, one has to solve the additional optimization problem of
\begin{equation}
    \bm{\hat{\alpha}_n}=\argmin_{\bm{\alpha}\in \Omega \subset \mathbb{R}^K} ||x_n-\sum_k\alpha_k\hat{\phi}_k ||.
\end{equation}

As a result, whether one is in a complete or over-complete dictionary setting, the optimization problems are always ill-posed by the non-uniqueness of the solutions forcing to impose additional structures or constraints in order to reach well posed problems. For the complete case, the main approach consist in imposing that as few atoms as possible are used leading to PCA, a very powerful approach for dimensionality reduction and compression. For over-complete cases, different sparsity criteria are imposed on the features $\bm{\alpha}_n$ such as norm-(0,1,2). For a norm-0 we are in the Matching pursuit case, norm1 is sparse coding and norm2 is ridge regression.
For each of those cases many very efficient exact or iterative optimization algorithms have been developed to estimate $\hat{\Phi}_k$ and $\hat{\bm{\alpha}}_n$ yet there still exist a conceptual gap between the two concepts and the two approaches are often seen as orthogonal.

As we have seen, both settings lead to different beneficial aspects, compression, easy of projection and reconstruction or sparsity/clustering but more complex optimization problems, but, at a higher level, the signal processing community has always put a gap between those frameworks. As well put by Meyer in [CITE] one has to choose between encoding and representation.

We thus propose in this paper a novel approach allowing to have an over-complete dictionary yet given an input, a dynamic basis selection reduces it to an optimal complete dictionary without need for optimization in order to reconstruct. The selection is done without optimization in a forward manner leading to an efficient algorithm. This thus allows to inherit from all the properties induced by orthonormal basis while allowing adaptivity for better allowing to learn an over-complete basis with a nonlinear basis selection leading to an orthonormal basis when conditioned on the input. We also provide results from a low-dimensional manifold perspective and show that our approach perform nonparametric orbit estimation.
We validate on compression, dictionary learning tasks and clustering.

\section{Deep Residual Oja Network}
\subsection{Shallow Model}
Is it possible to learn a dictionary inheriting the benefits or complete and over-complete dictionaries ? We present one solution here. 
We first motivate the need for such a framework as well as present the general approach and notations. Throughout the next sections, the choice of the Oja name for the algorithm will become blatant for the reader.

Keeping the previously defined notation, we aim at learning an over-complete basis with the number of atoms defined by $FK$ with $F>1$ called the increase factor, note that $F=1$ lead to a complete basis, $K=D$ unless otherwise defined. By definition, the following projection-reconstruction scheme
\begin{equation}
    \hat{x}_n=\sum_k \frac{<x_,\hat{\phi}_k>}{||\hat{\phi}_k||^2}\hat{\phi}_k,
\end{equation}
can not reach an error $E_n<\epsilon$ and in fact $E_n$ increases with $F$. One way to resolve this issue comes from an optimal basis selection point of view leading to
\begin{equation}\label{loss1}
    E_n:=||x_n-\sum_k \delta_{n,k}\frac{<x_,\hat{\phi}_k>}{||\hat{\phi}_k||^2}\hat{\phi}_k|| < \epsilon,
\end{equation}
with $\delta{n,k}\in \{0,1\},\forall n,k$ representing a mask allowing to use a subset of the dictionary $\hat{\Phi}_{FK}$ we denote by $\rho_{\delta_{n,.}}[\hat{\Phi}_{FK}]$. 

\subsubsection{Error Bounds, Learning and Link with Oja Rule}
We first provide an error upper-bound for the proposed scheme $(S1)_{\kappa=1}$.
To simplify notations be also define by $\phi_\kappa(x_n)$ the atom defined by
\begin{equation}
\phi_\kappa(x_n)=\phi_{k'},k'=\argmax_k \frac{|<x_n,\phi_k>|^2}{||\phi_k||^2}.
\end{equation} 
\begin{theorem}
The error induced by $(S1)_{\kappa=1}$ is $||x_n||^2-\frac{|<x,\phi_\kappa(x_n)>|^2}{||\phi_\kappa(x_n)||^2}$ which is simply the reconstruction error from the best atom as since only one filter is used, nothing else is present.
\end{theorem}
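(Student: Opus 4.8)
The plan is to unfold the definition of the $(S1)_{\kappa=1}$ scheme and expand the squared norm; the statement is essentially the Pythagorean identity for the orthogonal projection onto a line. Under $(S1)_{\kappa=1}$ the mask $\delta_{n,\cdot}$ has a single nonzero entry, located at $k'=\argmax_k |\langle x_n,\phi_k\rangle|^2/\|\phi_k\|^2$, so that $\rho_{\delta_{n,\cdot}}[\hat{\Phi}_{FK}]=\{\phi_{k'}\}$ and the masked reconstruction in~\eqref{loss1} collapses to the single term $\hat{x}_n=\frac{\langle x_n,\phi_\kappa(x_n)\rangle}{\|\phi_\kappa(x_n)\|^2}\phi_\kappa(x_n)$.

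First I would substitute this $\hat{x}_n$ into the (squared) reconstruction error $E_n=\|x_n-\hat{x}_n\|^2$ and expand by bilinearity of the inner product. Writing $\phi=\phi_\kappa(x_n)$ for brevity,
\begin{equation}
E_n=\|x_n\|^2-2\frac{\langle x_n,\phi\rangle}{\|\phi\|^2}\langle x_n,\phi\rangle+\frac{\langle x_n,\phi\rangle^2}{\|\phi\|^4}\|\phi\|^2=\|x_n\|^2-\frac{|\langle x_n,\phi\rangle|^2}{\|\phi\|^2},
\end{equation}
since the cross term (coefficient $-2$) and the quadratic term (coefficient $+1$) combine into a single negative contribution, and $\langle x_n,\phi\rangle^2=|\langle x_n,\phi\rangle|^2$. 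This is exactly the claimed expression, and there is nothing more to compute.

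The only point worth making explicit — which I would treat as the actual content of the statement rather than an obstacle — is why the $\argmax$ selection is the correct one, i.e. that among all rank-one reconstructions $\frac{\langle x_n,\phi_k\rangle}{\|\phi_k\|^2}\phi_k$ the choice $k=k'$ minimizes the residual, so that no inner optimization over $\bm{\alpha}_n$ is needed. Running the same expansion with an arbitrary atom $\phi_k$ gives a single-atom error $\|x_n\|^2-|\langle x_n,\phi_k\rangle|^2/\|\phi_k\|^2$, which is strictly decreasing in the normalized correlation $|\langle x_n,\phi_k\rangle|^2/\|\phi_k\|^2$; hence maximizing that ratio — precisely the definition of $\phi_\kappa(x_n)$ — is equivalent to minimizing $E_n$. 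There is no genuine obstacle here: the result is just the orthogonal-projection identity applied to the line $\mathrm{span}\{\phi_\kappa(x_n)\}$, with the mask reducing the over-complete dictionary to that one-dimensional subspace.
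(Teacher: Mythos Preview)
Your argument is correct, but it proceeds differently from the paper. You expand $\|x_n-\hat{x}_n\|^2$ directly by bilinearity and collapse the cross and quadratic terms --- a one-line Pythagorean computation for the projection onto $\mathrm{span}\{\phi_\kappa(x_n)\}$. The paper instead invokes the incomplete basis theorem to extend $\phi_\kappa(x_n)$ to a full orthogonal basis $\{\phi_k\}_{k=1}^D$, rewrites $x_n$ in that basis, cancels the $\kappa$-th term, and then applies Parseval's identity to the remaining sum to reach $\|x_n\|^2-|\langle x_n,\phi_\kappa(x_n)\rangle|^2/\|\phi_\kappa(x_n)\|^2$; it also records the cosine form $\|x_n\|^2\bigl(1-\cos(\theta(x_n,\phi_\kappa(x_n)))^2\bigr)$, which is used downstream. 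Your route is more elementary and self-contained; the paper's route, while heavier, makes the orthogonal-complement interpretation of the residual explicit and naturally yields the angle formulation that the subsequent sections rely on. Your added remark that the $\argmax$ rule genuinely minimizes the single-atom error is a nice clarification that the paper leaves implicit.
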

\begin{proof}
By the incomplete basis theorem, there exists a basis s.t. it contains the atom $\phi_{\kappa}$, we denote by $\phi_k$ such a basis, with $k=1,...,D$, we thus have
\begin{align}
E_n=&|| x_n- \frac{<x,\phi_\kappa(x_n)>}{||\phi_\kappa(x_n)||^2}\phi_\kappa(x_n)||^2\nonumber\\
=&|| \sum_k\frac{<x_n,\phi_k>}{||\phi_k||^2}\phi_k- \frac{<x,\phi_\kappa(x_n)>}{||\phi_\kappa(x_n)||^2}\phi_\kappa(x_n)||^2 &&\text{ incomplete basis theorem} \nonumber \\
=&|| \sum_{k\not = \kappa}\frac{<x_n,\phi_k>}{||\phi_k||^2}\phi_k||^2\nonumber\\
=&\sum_{k\not = \kappa}\frac{|<x_n,\phi_k>|^2}{||\phi_k||^2}\nonumber\\
=&||x||^2-\frac{|<x_n,\phi_\kappa(x_n)>|^2}{||\phi_\kappa(x_n)||^2}&&\text{Parseval's Theorem}\nonumber\\
=&||x||^2\Big(1-\cos(\theta(x_n,\phi_\kappa(x_n))^2\Big)\label{en_eq}
\end{align}
And we have by definition $E_n\geq 0, E_n=0 \iff x_n=\phi_\kappa(x_n)$
\end{proof}
There is first a few comments on the loss and updates. First of all, the loss is closely related to a k-mean objective with cosine similarity and specifically spherical k-means which is the case where the centers and the data points are re-normalized to unit norm and has the objective to minimize
\begin{equation}
    \sum_n(1-cos(x_n,p_{c(n)})).
\end{equation}

\subsubsection{Learning and Oja Rule}
In order to learn the filter-bank $\Phi_K$, a common approach is to use an alternating scheme between finding the cluster belongings and optimizing the atoms w.r.t. this estimate. We first derive a gradient descent scheme to update the atoms and study some of its characteristics.

If we now denote by $n(k):=\{n:n=1,...,N|\kappa(x_n)=k\}$ be the collection of the sample index in cluster $k$, he resulting loss $E_{n(k)}:=\frac{\sum_{n \in n(k)}E_n}{Card(n(k))}$ we can now derive a gradient descent step as
\begin{equation}
    \phi_k^{(t+1)}(\lambda)=\phi_k^{(t)}-\lambda \frac{d E_{n(k)}}{d \phi_k},
\end{equation}
with 
\begin{align}
    \frac{d E_{n(k)}}{d \phi_k}&=\frac{1}{Card(n(k))}\sum_{n \in n(k)} \frac{2|<x_n,\phi_k>|}{||\phi_k||^2}\Big( \frac{|<x_n,\phi_k>|\phi_k}{||\phi_k||^2}-(-1)^{1_{<x_n,\phi_k> <0}}x_n \Big),\nonumber\\
    &=\frac{1}{Card(n(k))}\sum_{n \in n(k)} \frac{2<x_n,\phi_k>}{||\phi_k||^2}\Big( \frac{<x_n,\phi_k>\phi_k}{||\phi_k||^2}-x_n \Big).\label{oja_eq}
\end{align}
On the other hand, if one adopts an adaptive gradient step $\lambda$ per atom and point with one of the two strategies $\lambda_1,\lambda_2$ defined as
\begin{align}
    \lambda_1&=\frac{<x_n,\phi_k>}{2||x_n||^2}\\
    \lambda_2&=\frac{||\phi_k||^4}{2<x_n,\phi_k>^2}
\end{align}
then we have the 
\begin{align}
    \phi_k^{(t+1)}(\lambda_1)&=\phi_k^{(t)}-\frac{1}{\sum_n \cos(\theta(x_n,\phi_k))^2}\sum_{n \in n(k)}\cos(\theta(x_n,\phi_k))^2\Big( \frac{<x_n,\phi_k>\phi_k}{||\phi_k||^2}-x_n \Big),\label{eq_online1}\\
    \phi_k^{(t+1)}(\lambda_2)&=\frac{1}{Card(n(k))}\sum_{n \in n(k)}\frac{||\phi_k||^2}{<x_n,\phi_k>}x_n\label{eq_online2}
\end{align}
we thus end up with in the $\lambda_1$ case to a simple update rule depending on a weighted average of the points in the cluster based on their cosine similarity squared whereas for $\lambda_2$ we obtain a rule a la convex NMF with is a plain without update combination of the points available.

On the other hand, it is clear that minimizing $E_n$ from Eq. \ref{en_eq} is equivalent to maximizing $E^+_n=\frac{<x_n,\phi_\kappa(x_n)>}{||\phi_\kappa(x_n)||^2}$. As a result,  one can seize in Eq. \ref{oja_eq} the Oja rule as we can rewrite a GD update of $\phi_k$ as
\begin{align}
    \phi_k^{(t+1)}&=\phi^{(t)}_k+\gamma \frac{d E^+_n}{d \phi_k}(\phi^{(t)}_k)\\
    \phi_k^{(t+1)}&=\phi^{(t)}_k+\gamma \Big( x_n\frac{<x_n,\phi_k>}{||\phi_k||^2}-(\frac{<x_n,\phi_k>}{||\phi_k||^2})^2\phi_k \Big)\label{eq_online3}
\end{align}
known as the Oja rule.
SPEAK ABOUT OJA RULE. And in fact, the convergence of Oja rule toward the first eigenvector-eigenvalue is not surprising as $E^+_{n(k)}$ leads explicitly to 
\begin{equation}
    \phi_k=\argmax_{\phi}\frac{1}{Card(n(k))}\frac{\phi^TX(k)^TX(k)\phi}{\phi^T\phi},\label{eq_pca}
\end{equation}
which is known as the Rayleigh quotient and is a formulation of PCA leading to a one step global optimum being the greatest eigenvector-eigenvalue.
\begin{figure}[h]
\centering
\includegraphics[width=5in]{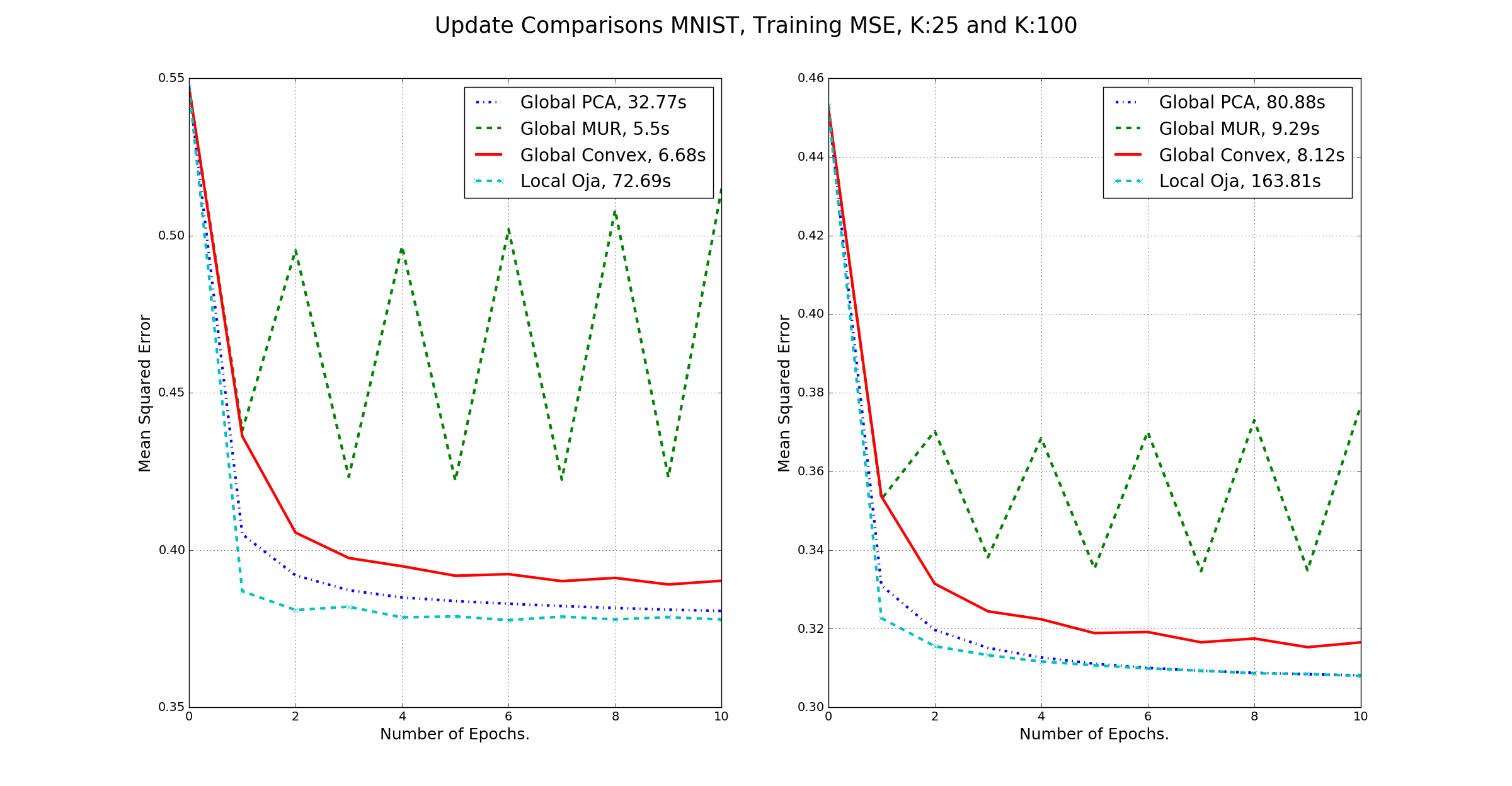}
\end{figure}

\begin{pseudocode}[doublebox]{Filter-Bank Learning strategy}{X,K}
\text{Initialize }\Phi_K\\
\WHILE \text{not converged} \DO
\BEGIN
\FOR k \GETS 1 \TO K \DO
\BEGIN
\text{Compute }n(k) \text{ with current }\Phi_K\\
\text{Update }\phi_k \text{with }n(k) \text{ and }X(k)\text{ according to Eq. \ref{eq_pca}}\\
\END\\
\END\\
\RETURN{\Phi_k}
\end{pseudocode}

\begin{pseudocode}[doublebox]{Online Filter-Bank Learning strategy}{X,K}
\text{Initialize }\Phi_K\\
\WHILE \text{not converged} \DO
\BEGIN
\FOR n \GETS 1 \TO N \DO
\BEGIN
\kappa = \argmax_k \frac{|<x_n,\phi_k>|^2}{||\phi_k||^2||x_n||^2}\\
\text{Update }\phi_\kappa \text{ according to Eq. \ref{eq_online1} or Eq.\ref{eq_online2} or Eq.\ref{eq_online3}}\\
\END\\
\END\\
\RETURN{\Phi_k}
\end{pseudocode}

\begin{theorem}
If the distribution of the $x_n$ in the space is uniformly distributed, the optimal over-complete basis for $(S1)_{kappa=1}$ is thus the one corresponding of a quantization of the sphere, it is unique up to a change of sign and global rotations (same applied to each atom). For the $2$-dimensional case it is easy to see that the maximum error for any given point $x_n$ is exactly upper-bounded by $||x||^2\Big(1-\cos(\frac{\pi}{2FK})^2\Big)$ if $FK$ growths exponentially .(?? CHECK POWER OF HIGH DIMENION COSINE decompose as union of 2D spaces)
\end{theorem}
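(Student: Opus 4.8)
The plan is to recast ``optimal over-complete basis'' as a quantization problem on the sphere, settle the structure of the optimum there, and then carry out the two-dimensional estimate by hand. \textbf{Step 1 (reduction to angular quantization).} By Theorem 2 the error of $(S1)_{\kappa=1}$ at a point $x$ is
\begin{equation}
E(x)=\|x\|^2\Bigl(1-\cos\bigl(\theta(x,\phi_\kappa(x))\bigr)^2\Bigr),\qquad \cos\bigl(\theta(x,\phi_\kappa(x))\bigr)^2=\max_k\frac{\langle x,\phi_k\rangle^2}{\|x\|^2\|\phi_k\|^2}.
\end{equation}
Two observations follow. (i) $E$ is independent of $\|\phi_k\|$, so we may fix $\|\phi_k\|=1$; and because of the square each atom enters only through the line it spans, i.e. through a point $[\phi_k]\in\mathbb{RP}^{D-1}$. (ii) Writing $x=\|x\|u$ with $u\in S^{D-1}$, and reading ``uniformly distributed in the space'' as: the law of $x$ is radially symmetric, so $u\sim\mathrm{Unif}(S^{D-1})$ independently of $\|x\|$, we have $1-\max_k\cos^2\theta(x,\phi_k)=\min_k\sin^2\theta(x,\phi_k)=\sin^2 r(u)$, where $r(u):=\min_k d_{\mathbb{RP}}([u],[\phi_k])$ is the geodesic distance in $\mathbb{RP}^{D-1}$ from $[u]$ to the dictionary. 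Hence
\begin{equation}
\mathbb{E}[E(x)]=\mathbb{E}\bigl[\|x\|^2\bigr]\int_{S^{D-1}}\sin^2 r(u)\,du,\qquad \sup_x\frac{E(x)}{\|x\|^2}=\sin^2\Bigl(\max_{u}r(u)\Bigr),
\end{equation}
so minimizing the expected (resp. worst-case) reconstruction error over dictionaries of $FK$ atoms is exactly the problem of placing $FK$ points in $\mathbb{RP}^{D-1}$ to minimize the distortion $\int\sin^2$ (resp. the covering radius).

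\textbf{Step 2 (the optimum is a uniform quantization).} For the distortion functional one argues by a centroidal/exchange argument: at a minimizer the $\mathbb{RP}^{D-1}$-Voronoi cells of the atoms must be ``balanced'', because $\sin^2$ is a strictly increasing, strictly convex function of geodesic distance on any ball of radius $<\pi/2$, so moving an atom toward the weighted centroid of its own cell strictly decreases $\int\sin^2$ unless it is already there. The balanced stationary configurations are precisely the uniform quantizations of the sphere, and both functionals are invariant under $O(D)$ acting simultaneously on all atoms and under flipping the sign of any single atom; this gives the claimed invariance ``up to a global rotation and per-atom sign change''.

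\textbf{Step 3 (the $2$-D bound).} For $D=2$, $\mathbb{RP}^{1}$ is a circle of circumference $\pi$ (angles $\theta$ and $\theta+\pi$ identified), and placing $FK$ points on it to minimize the covering radius has the solution, unique up to rotation, of equispaced points with gap $\pi/(FK)$; hence $\max_u r(u)=\pi/(2FK)$, attained exactly at the midpoints between consecutive atom-lines. Since $t\mapsto\sin^2 t$ is increasing on $[0,\pi/2]$ and $\pi/(2FK)\le\pi/2$, Theorem 2 gives, for every sample,
\begin{equation}
E(x)=\|x\|^2\Bigl(1-\cos\bigl(\theta(x,\phi_\kappa(x))\bigr)^2\Bigr)\le\|x\|^2\Bigl(1-\cos\bigl(\tfrac{\pi}{2FK}\bigr)^2\Bigr),
\end{equation}
with equality realized, so the bound is exact. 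Finally $1-\cos(\pi/(2FK))^2=\sin^2(\pi/(2FK))=\Theta\bigl((FK)^{-2}\bigr)$, so the worst-case error shrinks quadratically in the number of atoms, hence exponentially fast whenever $FK$ is grown exponentially — exactly the regime realized by the deep/residual construction of the next section.

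\textbf{Where the difficulty sits.} Step 3 is elementary (a one-dimensional equipartition). The real obstacle is Step 2 for $D>2$: minimizing the spherical distortion $\int\sin^2$ over finite point sets subsumes hard spherical-code/Tammes-type problems, for which neither explicit optimizers nor strict uniqueness are known in general. The honest reading is therefore either (a) restrict to $D=2$, where optimality, uniqueness-up-to-symmetry and the exact bound all hold; or (b) interpret the uniqueness claim as holding within the class of mass-balanced configurations, recovering asymptotic uniformity of the optimal quantizer as $FK\to\infty$ from Zador-type quantization theory on the homogeneous space $S^{D-1}$.
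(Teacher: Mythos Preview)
Your Steps~1--3 are considerably more careful than the paper's own argument, and your 2D computation in Step~3 is exactly right. However, you and the paper are not proving quite the same thing, because you read the clause ``if $FK$ grows exponentially'' differently.

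The paper's proof does not engage with optimality or uniqueness (your Steps~1--2) at all; it simply asserts the 2D bound and then spends its entire effort on the parenthetical hint ``decompose as union of 2D spaces''. Concretely, it splits $\mathbb{R}^D$ into $D/2$ orthogonal two-dimensional planes, writes
\[
\|x-\hat{x}\|^2=\sum_{d=1}^{D/2}\|x_d-\hat{x}_d\|^2,
\]
and observes that to guarantee the planar bound $\|x_d\|^2(1-\cos(\pi/(2FK))^2)$ simultaneously in every coordinate plane one must take the Cartesian product of the $D/2$ planar dictionaries --- hence the total number of atoms must grow exponentially in $D$ for the per-plane error to stay fixed. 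That is the intended meaning of ``if $FK$ grows exponentially'': exponential in the ambient dimension, not (as you read it at the end of Step~3) exponential growth of $FK$ driving exponential decay of the 2D error.

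So what each approach buys: your route actually supplies an argument for the optimality/uniqueness claim (rigorous for $D=2$, with an honest caveat for $D>2$), which the paper leaves completely unargued; the paper's route supplies the curse-of-dimensionality remark that motivates the deep residual construction of the following section, which your proposal does not touch. Neither argument alone establishes the full theorem as stated --- the paper's because it skips your Steps~1--2, yours because it omits the $D/2$-subspace decomposition behind the ``exponential'' clause --- but together they cover it.
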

\begin{proof}
For 2D we now the upper bound is $||x||^2\Big(1-\cos(\frac{\pi}{2FK})^2\Big)$ with $FK$ atoms, we thus rewrite 
\begin{align*}
||x-\hat{x}||^2=\sum_{d=1}^{D/2}||x_d-\hat{x}_d||^2
\end{align*}
and see that in order to have the upper bound for each subspace we need the cartesian product of all the subspace basis $\Phi_{FK}$ leading to $FK$ atoms. Thus one need to grow exponentially the number of atom w.r.t the dimension to have a loss increasing linearly.
\end{proof}
\begin{figure}[h]
\centering
\includegraphics[width=5in]{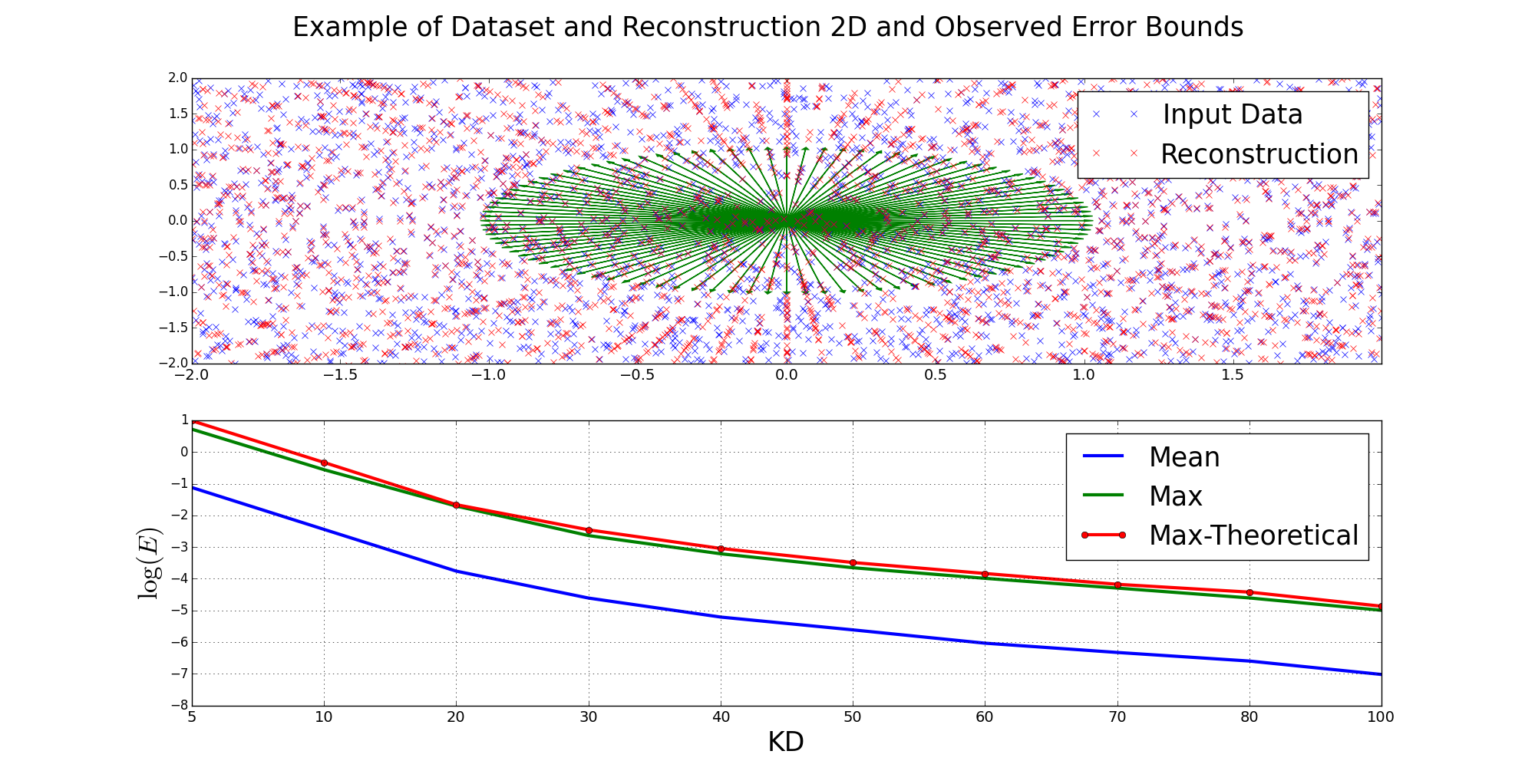}
\end{figure}

However this pessimistic upper-bound assumes the worst possible scenario: uniform distribution of the data point in the space $\mathbb{R}^D$ which in general is false. In fact, many dataset have inherent structures and at least lye only in a small subset sometime regular of $\mathbb{R}^D$.
In general, data might be living in unions of subsets and thus providing a general strategy or optimal basis is a priori more complex thus pushing the need to learn the atoms as it is done in general for k-mean applications.
\begin{theorem}\label{th5}
A sufficient condition for $(S1)_{kappa=1}$ to be optimal is that the data are already clustered along $FD$ lines, or orbits ??? :)
\end{theorem}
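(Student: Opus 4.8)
The plan is to prove the statement by construction: under the stated hypothesis I exhibit an over-complete dictionary of $FD = FK$ atoms (recall $K=D$) on which $(S1)_{\kappa=1}$ achieves $E_n=0$ for every $n$, hence reaches the global minimum of the objective in Eq.~\ref{loss1} and is therefore optimal. First I would make the hypothesis precise: assume the support of $X$ is contained in a union $\ell_1\cup\dots\cup\ell_{FD}$ of at most $FD$ lines through the origin, $\ell_i=\{t\,u_i:t\in\mathbb{R}\}$, and set $\hat\Phi_{FD}:=\{u_1,\dots,u_{FD}\}$, the $u_i$ taken of unit norm without loss of generality since the selection criterion $|\langle x,\phi_k\rangle|^2/\|\phi_k\|^2$ is invariant to rescaling of the atoms.

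Next I would check the atom selection. Fix $x_n$ and pick a line $\ell_i\ni x_n$, so $x_n=t\,u_i$; then $|\langle x_n,\phi_i\rangle|^2/\|\phi_i\|^2=t^2=\|x_n\|^2$, whereas Cauchy--Schwarz gives $|\langle x_n,\phi_k\rangle|^2/\|\phi_k\|^2\le\|x_n\|^2$ for every $k$, with equality only if $x_n$ is colinear with $\phi_k$ as well. Consequently every maximizer of the $\argmax$ defining $\phi_\kappa(x_n)$ is colinear with $x_n$, i.e. $\cos^2\theta(x_n,\phi_\kappa(x_n))=1$. Plugging this into the error formula Eq.~\ref{en_eq} yields $E_n=\|x_n\|^2\bigl(1-\cos^2\theta(x_n,\phi_\kappa(x_n))\bigr)=0$, and summing over $n$ gives total error $0$, which is the minimum since $E_n\ge0$ always. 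This settles the ``lines'' case.

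For the ``orbits'' clause I would argue that the most natural reading collapses onto the same statement: the orbit of a seed vector under the scaling group $\mathbb{R}^{\times}$ is precisely a punctured line, so ``clustered along $FD$ orbits'' of that group is identical to ``clustered along $FD$ lines'', and the argument above applies verbatim. A strictly more general notion — orbits under a nontrivial rotation subgroup, say — would instead require the dictionary to be chosen equivariantly so that each orbit still contains a colinear atom, and I would flag that as a conjecture rather than attempt it here.

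The one place that genuinely needs care is the tie-breaking in $\phi_\kappa$: a point lying at an intersection of two of the lines (in particular the origin) or colinear with several atoms has several maximizers, and one must confirm, as above via the equality case of Cauchy--Schwarz, that any such maximizer still gives $\cos^2\theta=1$ and hence $E_n=0$, so the conclusion is independent of the tie-breaking rule. A further honest caveat I would include is that the theorem only asserts that zero error is \emph{achievable} by some dictionary of this size; whether the alternating learning scheme actually converges to such a configuration is a separate matter, since the spherical-$k$-means-type objective of Eq.~\ref{loss1} is non-convex and can in general possess suboptimal fixed points.
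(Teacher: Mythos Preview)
The paper states Theorem~\ref{th5} without proof (the very next line jumps to ``We now present one way to tackle the curse of dimensionality''), so there is no authorial argument to compare against. Your construction is correct and is essentially the canonical one: take the unit directions of the $FD$ lines as the dictionary, use Cauchy--Schwarz to see that the selected atom $\phi_\kappa(x_n)$ must be colinear with $x_n$, and then read off $E_n=0$ from Eq.~\ref{en_eq}. The handling of ties via the equality case of Cauchy--Schwarz is the right way to close the argument, and your reading of ``orbits'' as orbits of the scaling group is the only interpretation consistent with the single-atom reconstruction formula. The closing caveat about non-convexity of the learning objective is accurate and worth keeping, since the theorem as stated concerns existence of an optimal dictionary, not convergence of the alternating scheme to it.
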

We now present one way to tackle the curse of dimensionality in the next section

\subsection{Multiple Atoms}
\begin{equation}
    E_n=|| x_n- \sum_{k=1}^K\frac{<x,\phi^k_\kappa(x_n)>}{||\phi^k_\kappa(x_n)||^2}\phi^k_\kappa(x_n)||^2
\end{equation}
For learning atom after atom a la coordinate ascent we have that 
\begin{align*}
    \hat{\phi}^{k'}_j=& \argmin_{\phi^{k'}_j}\sum_n E_n\\
    =&\argmin_{\phi^{k'}_j}\sum_{n \in n(k,j)} || \Big(x_n- \sum_{k=1,\not = k'}^K\frac{<x,\phi^k_\kappa(x_n)>}{||\phi^k_\kappa(x_n)||^2}\Big)-\frac{<x,\phi^{k'}_j>}{||\phi^{k'}_j||^2}\phi^{k'}_\kappa(x_n)||^2
\end{align*}
as we showed in the last section we end up with the same update rule but with the input being substracted by the other used atoms. Thus we still perform PCA but on the input minus the other atoms. Note that this ensures orthogonality between the atoms.

\subsection{From Shallow to Deep Residual for better Generalization Error Bounds and Combinatorially Large Dictionaries}
\begin{figure}[t]
\centering
\includegraphics[width=5in]{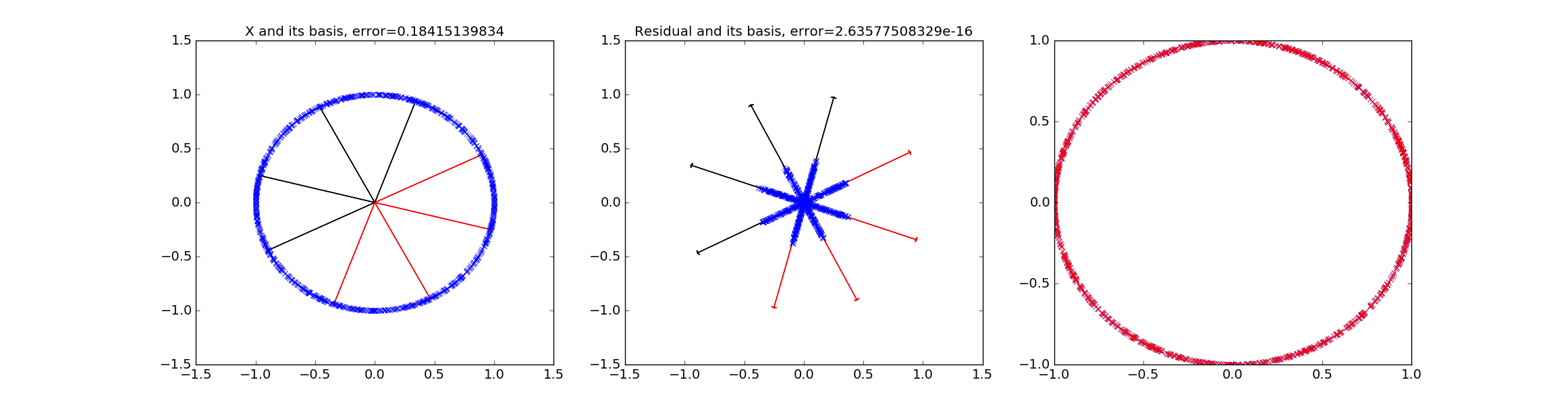}
\end{figure}

We now consider the analysis of the generalization performance on out of bag observations as well as the problem of having really large dataset.

\begin{theorem}
If we suppose an finite training set and an Oja Network with sufficient filters in order to reach a training error of $0$ then we know that the generalization error is directly lower-bounded by how close the testing and training examples are. In fact
\begin{equation}
    E_{new}\propto \cos(\theta(x_\kappa,x_{new})),\kappa=\argmin_n \cos(\theta(x_n,x_{new})).
\end{equation}
\end{theorem}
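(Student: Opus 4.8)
The plan is to reduce everything to the shallow per-point error identity already established for $(S1)_{\kappa=1}$, namely $E_n=\|x_n\|^2\bigl(1-\cos^2\theta(x_n,\phi_\kappa(x_n))\bigr)$, and then read off what a zero-training-error dictionary must look like. First I would invoke that identity together with the equivalence $E_n=0\iff x_n=\phi_\kappa(x_n)$: reaching zero training error forces the selected atom of every training sample to be colinear with that sample, so an Oja Network with "sufficient filters" that achieves zero training error must contain, up to positive scaling, every $x_n$ among its atoms. For the minimal such configuration ($FK=N$) the dictionary is exactly $\{x_1,\dots,x_N\}$ (normalization being irrelevant to the projection formula), and this is the regime in which I would prove the statement; an over-parameterized zero-error dictionary only adds further atoms and is handled by the same argument applied to whichever atom wins the selection.

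Next I would run the forward selection rule on the out-of-bag point $x_{new}$. By definition $\phi_\kappa(x_{new})=\argmax_{\phi}\frac{|\langle x_{new},\phi\rangle|^2}{\|\phi\|^2}$, and substituting the training-point atoms turns this into $\argmax_{n}\frac{|\langle x_{new},x_n\rangle|^2}{\|x_n\|^2}=\argmax_n \|x_{new}\|^2\cos^2\theta(x_{new},x_n)$; since $\|x_{new}\|$ is fixed, the maximizer is the training sample of largest $|\cos\theta(x_{new},\cdot)|$, i.e. the one of smallest angle to $x_{new}$ — call it $x_\kappa$ (this is the $\kappa$ of the statement, reading the $\argmin$ as taken over the angle, equivalently $\argmax$ over $\cos\theta$). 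Plugging $\phi_\kappa(x_{new})=x_\kappa$ back into the shallow error identity gives at once
$$E_{new}=\|x_{new}\|^2\bigl(1-\cos^2\theta(x_{new},x_\kappa)\bigr)=\|x_{new}\|^2\sin^2\theta(x_{new},x_\kappa),$$
so the generalization error is a strictly increasing function of the angular distance from $x_{new}$ to its nearest training neighbor, and vanishes precisely when $x_{new}$ is colinear with some training point. This is the asserted proportionality $E_{new}\propto\cos\theta(x_\kappa,x_{new})$ — more precisely, monotone in that alignment through the relation $\sin^2=1-\cos^2$ and the scale factor $\|x_{new}\|^2$ — and it furnishes the claimed data-dependent floor on the test error.

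I expect the main obstacle to be the "sufficient filters" clause and the precise meaning of "lower-bounded": one must argue that no zero-training-error dictionary \emph{built from the training data} can beat its own nearest-neighbor atom on an unseen $x_{new}$, i.e. that the extra atoms of an over-complete zero-error dictionary cannot be arbitrarily well aligned with a point that was never observed, and that the single-atom selection therefore yields a genuine bound (rather than only an exact value) once $\kappa>1$ atoms are allowed. I would handle this by restricting, as above, to the canonical $FK=N$ dictionary where the identity is exact, and stating the general case as a corollary of the fact that the $(S1)_{\kappa=1}$ reconstruction never exploits more alignment than the best available atom provides, so adding atoms can only help if one of them happens to align with $x_{new}$ — which, for atoms synthesized from training residuals, is controlled by the same nearest-neighbor quantity. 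A secondary, purely cosmetic point is reconciling the loose "$\propto$" and "$\argmin_n\cos$" in the statement with the exact expression $\|x_{new}\|^2\sin^2\theta$ and the nearest-neighbor ($\argmax_n\cos$) selection, which I would flag explicitly when writing down the final formula.
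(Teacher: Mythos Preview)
Your proposal is correct and follows the same idea as the paper, which disposes of the theorem in a single sentence (``The proof is straightforward as we know the network is able to perfectly reconstruct the training set and only the training set''). You have fleshed out that sketch into an actual argument via the shallow error identity, and your observations about the loose ``$\propto$'' and the $\argmin_n\cos$ versus $\argmax_n\cos$ slip are well taken --- the paper does not address either.
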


The proof is straightforward as we know the network is able to perfectly reconstruct the training set and only the training set.
As a result, if the training set is well and uniformly sampled among the space of possible observations, a shallow Oja Network can be considered as optimal also for the testing set.
However, and especially for computer vision task, it is well known that every observation is very far form each other in term of distance when dealt with in the pixel domain, also, requiring a proper sampling of the space of images is clearly outrageous.
We thus now present the result motivating deep architectures in general including the Oja Network.

\begin{align}
R^{(l)}_n=&R^{(l-1)}_n-\frac{<R^{(l-1)}_n,\phi_\kappa^{(l)}>}{||\phi_\kappa^{(l)}||^2}\phi_\kappa^{(l)}, \kappa = \argmax_k \frac{|<R^{(l-1)}_n,\phi_k^{(l)}>|}{||R^{(l-1)}_n||^2||\phi_k^{(l)}||^2} \nonumber \\
R^{(0)}_n=&x_n
\end{align}

as a result as soon as the input and the template are not orthogonal there is convergence. 
\begin{theorem}
Since we have by definition that the selected atom is the one with smaller angle, if it is $0$ it means that the input $R^{(l-1)}$ is orthogonal to all the learned dictionary $\Phi^{(l)}$
\begin{equation}
\cos \Big(\theta (R^{(l-1)}_n,\phi^{(l)}_\kappa) \Big)^2=0 \iff R^{(l-1)} indep \phi^{(l)}_k \forall k,
\end{equation}
and thus they live in two orthogonal spaces.
TO PROVE : ALL THE NEXT ONES ARE ALSO 0 
\end{theorem}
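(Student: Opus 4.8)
The plan is to unwind the two definitions at play --- the cosine $\cos(\theta(\cdot,\cdot))$ and the selection index $\kappa$ --- and to observe that the $\argmax$ structure of $\kappa$ makes the equivalence almost immediate; the trailing remark then needs only a one-line induction once the right hypothesis is pinned down. Throughout I would assume $R^{(l-1)}_n\neq 0$ and $\phi^{(l)}_k\neq 0$ for every $k$, so that all the angles are well defined, and I would flag the degenerate case $R^{(l-1)}_n=0$ (reconstruction already exact, update trivial) as an implicit side hypothesis of the statement.

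First, the direction $(\Rightarrow)$. By definition $\cos(\theta(R^{(l-1)}_n,\phi^{(l)}_\kappa))^2=\frac{|\langle R^{(l-1)}_n,\phi^{(l)}_\kappa\rangle|^2}{\|R^{(l-1)}_n\|^2\,\|\phi^{(l)}_\kappa\|^2}$, so the hypothesis is equivalent to $\langle R^{(l-1)}_n,\phi^{(l)}_\kappa\rangle=0$, i.e.\ the nonnegative quantity $q_k:=\frac{|\langle R^{(l-1)}_n,\phi^{(l)}_k\rangle|}{\|R^{(l-1)}_n\|^2\|\phi^{(l)}_k\|^2}$ vanishes at $k=\kappa$. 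Since $\kappa=\argmax_k q_k$ we have $\max_k q_k=q_\kappa=0$, and as every $q_k\geq 0$ this forces $q_k=0$, hence $\langle R^{(l-1)}_n,\phi^{(l)}_k\rangle=0$, for all $k$. Linearity of the inner product then promotes this to $R^{(l-1)}_n\perp\mathrm{span}(\Phi^{(l)})$, so the line $\mathbb{R}\,R^{(l-1)}_n$ and $\mathrm{span}(\Phi^{(l)})$ are mutually orthogonal subspaces, as claimed. The direction $(\Leftarrow)$ is immediate: if $R^{(l-1)}_n$ is orthogonal to every atom it is in particular orthogonal to $\phi^{(l)}_\kappa$, so the cosine is $0$.

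Finally, for the note ``all the next ones are also $0$'': from $\langle R^{(l-1)}_n,\phi^{(l)}_\kappa\rangle=0$ the layer update collapses to $R^{(l)}_n=R^{(l-1)}_n-0=R^{(l-1)}_n$, so the residual is a fixed point of the layer map. If the recursive architecture reuses the dictionary across depth ($\Phi^{(l+j)}=\Phi^{(l)}$, as the word ``recursive'' suggests), then $R^{(l-1)}_n$ remains orthogonal to $\Phi^{(l+1)}$, the same argument reapplies, and by induction $R^{(l-1+j)}_n=R^{(l-1)}_n$ with $\cos(\theta(R^{(l-1+j)}_n,\phi^{(l+1+j)}_\kappa))^2=0$ for every $j\geq 0$ --- the network makes no further progress. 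The one real obstacle is exactly this step: if the per-layer dictionaries are genuinely independent, the trailing claim does not follow from the stated equivalence alone, and one needs either the shared-dictionary hypothesis above or a nesting condition $\mathrm{span}(\Phi^{(l)})\supseteq\mathrm{span}(\Phi^{(l+1)})\supseteq\cdots$; I would state whichever the paper actually intends, after which the induction is routine. Everything else is bookkeeping.
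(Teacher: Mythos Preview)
The paper provides no proof whatsoever for this theorem --- it is stated bare, and the ``TO PROVE'' line is literally a note-to-self left in the draft. Your argument for the biconditional is correct and is the natural one: the forward direction is exactly the observation that a nonnegative function attaining zero at its $\argmax$ is identically zero, and the reverse direction is trivial. There is nothing to compare against, and nothing missing from your treatment of the equivalence.

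Your handling of the trailing ``all the next ones are also $0$'' is more careful than the paper itself. You correctly derive that the layer map has $R^{(l-1)}_n$ as a fixed point once the inner product vanishes, and you correctly flag that propagating this to deeper layers requires either a shared dictionary $\Phi^{(l+j)}=\Phi^{(l)}$ or a nesting condition on the spans. The paper's architecture (see the \textsc{Residual Oja Network} pseudocode) in fact learns a \emph{separate} dictionary $\Phi^{(l)}$ at each depth from the residuals of the full training set, so neither hypothesis holds in general: a single sample $n$ may stall at layer $l$ while other samples continue to shape $\Phi^{(l+1)}$, and there is no reason $R^{(l)}_n=R^{(l-1)}_n$ should be orthogonal to that new dictionary. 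Your diagnosis that the trailing claim does not follow from the stated equivalence alone is therefore accurate; the paper simply never resolves this, and the honest conclusion is that the ``TO PROVE'' remark is not a theorem under the paper's own setup without an added assumption of the kind you name.
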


\begin{theorem}
The residual decreases exponentially w.r.t. the depth of the model.
\end{theorem}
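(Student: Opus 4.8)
The plan is to unroll the residual recursion and feed each step into the single-atom error identity already proved. Fix an example $n$ and set $c_l:=\cos^2\!\big(\theta(R^{(l-1)}_n,\phi^{(l)}_\kappa)\big)$, the squared cosine between the residual entering layer $l$ and the atom the selection rule picks there. Because that rule chooses exactly the maximiser of $|\langle R^{(l-1)}_n,\phi^{(l)}_k\rangle|^2/(\|R^{(l-1)}_n\|^2\|\phi^{(l)}_k\|^2)$, the incomplete-basis-plus-Parseval computation leading to Eq.~\eqref{en_eq} applies verbatim with $x_n\mapsto R^{(l-1)}_n$ and $\phi_\kappa(x_n)\mapsto\phi^{(l)}_\kappa$, giving the one-step contraction
\begin{equation}
\|R^{(l)}_n\|^2=\|R^{(l-1)}_n\|^2\big(1-c_l\big).
\end{equation}
Iterating from $R^{(0)}_n=x_n$ telescopes this into
\begin{equation}
\|R^{(L)}_n\|^2=\|x_n\|^2\prod_{l=1}^{L}\big(1-c_l\big),
\end{equation}
so the whole question collapses to controlling the per-layer alignments $c_l$.

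Strict positivity alone ($c_l>0$, i.e.\ the residual is never orthogonal to the entire layer dictionary, which is the nondegeneracy clause of the preceding theorem) only yields a monotone decrease to a possibly nonzero limit. For the \emph{exponential} statement I would impose a uniform covering (frame-type) condition on each layer: there is a $\delta\in(0,1]$ such that for every unit direction $u\in\mathbb{R}^D$ some atom satisfies $|\langle u,\phi^{(l)}_k\rangle|/\|\phi^{(l)}_k\|\ge\delta$ --- equivalently, the normalised atoms form an $\arccos\delta$-net of the sphere. Then $c_l\ge\delta^2$ for every $l$ and every $n$, and the product bound gives
\begin{equation}
\|R^{(L)}_n\|^2\le\|x_n\|^2\,(1-\delta^2)^{L},
\end{equation}
geometric decay with ratio $1-\delta^2<1$, which is the claim. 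For a data-adaptive version the same inequality holds with $\delta^2$ replaced by $\min_{l\le L}\min_n c_l$, and one argues this is positive from the fact that the layerwise learning rule of the earlier sections pushes the atoms toward the leading principal directions of the incoming residuals, so the maximal alignment stays bounded away from $0$.

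The main obstacle is exactly securing that uniform gap: the telescoping is elementary, but nothing in the recursion by itself prevents $c_l\to0$ and the residual norm stalling, so the exponential rate is \emph{conditional} on the covering (or learned-alignment) hypothesis. A secondary point is that $\kappa$, and hence $c_l$, depends on $R^{(l-1)}_n$, which itself depends on all earlier layers; stating the gap uniformly over directions $u$ (as above) sidesteps the need to track the residual trajectory. Finally I would connect $\delta$ to the increase factor $F$ and the ambient dimension $D$ through a standard sphere-covering estimate, so that the bound exhibits the exponential-in-depth versus (combinatorially) large-width trade-off promised in the introduction.
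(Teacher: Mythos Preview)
Your telescoping of the one-step contraction $\|R^{(l)}_n\|^2=\|R^{(l-1)}_n\|^2(1-c_l)$ into the product $\|x_n\|^2\prod_{l}(1-c_l)$ is exactly the paper's proof; the only cosmetic difference is that the paper obtains the one-step identity by a direct Pythagorean expansion of $\|R^{(l-1)}_n-\mathrm{proj}_{\phi_\kappa^{(l)}}R^{(l-1)}_n\|^2$ rather than by invoking Eq.~\eqref{en_eq}, which is the same computation. Where you go beyond the paper is afterward: the paper stops at the product formula and declares that to be the exponential decrease, without ever introducing a uniform alignment gap. Your observation that a genuine geometric rate needs $c_l\ge\delta^2>0$, and your frame/covering hypothesis to secure it, are additions not present in the original---so your argument is strictly more careful, but for matching the paper only your first two displayed equations are required.
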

\begin{proof}
\begin{align}
||R^{(l)}_n||^2=&||R^{(l-1)}_n-\frac{<R^{(l-1)}_n,\phi_\kappa^{(l)}>}{||\phi_\kappa^{(l)}||^2}\phi_\kappa^{(l)}||^2 \nonumber \\
=&||R^{(l-1)}_n||^2-\frac{<R^{(l-1)}_n,\phi_\kappa^{(l)}>^2}{||\phi_\kappa^{(l)}||^2}\nonumber \\
=&||R^{(l-1)}_n||^2\Big(1-\cos \Big(\theta(R^{(l-1)}_n,\phi_\kappa^{(l)}) \Big)^2\Big)\\
=&||x_n||^2\prod_{l=1}^l\Big(1-\cos \Big(\theta(R^{(l-1)}_n,\phi_\kappa^{(l)}) \Big)^2 \Big)
\end{align}
\end{proof}

The final template can be flattened via
\begin{align}
T_n =& \sum_l \frac{<R^{(l-1)}_n,\phi_\kappa^{(l)}>}{||\phi_\kappa^{(l)}||^2}\phi_\kappa^{(l)}\\
=&\sum_l P^{(l)}_n
\end{align}

\subsubsection{Learning}
Computing the gradient finds a great recursion formula we define as follows:
\begin{align}
    A_{i,j}=\left\{ \begin{matrix}
         0 \iff j<i\\
         \frac{<R^{(i-1)},\phi^{(i)}_\kappa>I_d+R^{(i-1)}\phi^{(i)}_\kappa}{||\phi^{(i)}_\kappa||^2}+\frac{2<R^{(i-1)},\phi^{(i)}_\kappa>\phi^{(i)}_\kappa\phi^{(i)T}_\kappa}{||\phi^{(i)}_\kappa||^4} \iff i=j\\
         A_{i,j-1}-\frac{\phi^{(j)}_\kappa\phi^{(j)T}_\kappa A_{i,j-1}}{||\phi^{(j)}_\kappa||^2}\iff j>i
    \end{matrix} \right.
\end{align}
thus $A_{i,j} \in \mathbb{R}^{D \times D}$ then we have
\begin{align}
    \mathcal{L}_n=&||R^{(L)}_n||,\\
    \frac{\d \mathcal{L}^2}{\d \phi^{(l)}_\kappa}=&2R^{(L)}_n\frac{\d R^{(L)}_n}{\d \phi^{(l)}_\kappa}\\
    \end{align}
However as we will see below we have a nice recursive definition to compute all those derivatives, in fact
\begin{equation}\text{Init. }
\begin{cases}
    \frac{\d P^{(l)}_n}{\d \phi^{(l)}_\kappa}&=\frac{<R^{(l-1)}_n,\phi^{(l)}_\kappa>I_d+R^{(l-1)_n}\phi^{(l)}_\kappa}{||\phi^{(l)}_\kappa||^2}+\frac{2<R^{(i-1)}_n,\phi^{(l)}_\kappa>\phi^{(l)}_\kappa\phi^{(l)T}_\kappa}{||\phi^{(l)}_\kappa||^4},\\
    \frac{\d R^{(l)}_n}{\d \phi^{(l)}_\kappa}&=-\frac{\d P^{(l)}_n}{\d \phi^{(l)}_\kappa}
\end{cases}
\end{equation}

\begin{equation}\text{Recursion }
\begin{cases}
    \frac{\d P^{(l+1)}_n}{\d \phi^{(l)}_\kappa}&=\frac{\phi^{(l+1)}_\kappa\phi^{(l+1)^T}_\kappa}{||\phi^{(l+1)}_\kappa||^2}\frac{\d R^{(l)}_n}{\d \phi_\kappa^{(l)}},\\
    \frac{\d R^{(l+1)}_n}{\d \phi^{(l)}_\kappa}&=\frac{\d R^{(l)}_n}{\d \phi^{(l)}_\kappa}-\frac{\d P^{(l+1)}_n}{\d \phi^{(l)}_\kappa}
\end{cases}
\end{equation}

\begin{pseudocode}[doublebox]{Residual Oja Network}{X,K}
R_n \GETS X_n, \forall n \\
\FOR l \GETS 1 \TO L \DO
\BEGIN
\text{Initialize }\Phi^{(l)}_K \text{ from }R\\
\WHILE \text{not converged} \DO
\BEGIN
\FOR k \GETS 1 \TO K \DO
\BEGIN
\text{Compute }n(k) \text{ with current }\Phi^{(l)}_K\\
\text{Update }\phi^{(l)}_k \text{with }n(k) \text{ and }R(k)\text{ according to Eq. \ref{eq_pca}}\\
\END\\
\END\\
R_n = (R_n-\frac{<R_n,\phi^{(l)}_\kappa>}{||\phi^{(l)}_\kappa ||^2}\phi^{(l)}_\kappa)
\END\\
\RETURN{\Phi^{(l)}_k, \forall l}
\end{pseudocode}

\begin{pseudocode}[doublebox]{Online Residual Oja Network}{X,K}
R_n \GETS X_n, \forall n \\
\FOR l \GETS 1 \TO L \DO
\BEGIN
\text{Initialize }\Phi^{(l)}_K \text{ from }R\\
\WHILE \text{not converged} \DO
\BEGIN
\FOR n \GETS 1 \TO N \DO
\BEGIN
\kappa = \argmax_k \frac{|<R_n,\phi^{(l)}_k>|^2}{||\phi^{(l)}_k||^2||R_n||^2}\\
\text{Update }\phi^{(l)}_\kappa \text{ according to Eq. \ref{eq_online1} or Eq.\ref{eq_online2} or Eq.\ref{eq_online3}}\\
\END\\
\END\\
R_n = (R_n-\frac{<R_n,\phi^{(l)}_\kappa>}{||\phi^{(l)}_\kappa ||^2}\phi^{(l)}_\kappa)
\END\\
\RETURN{\Phi^{(l)}_k, \forall l}
\end{pseudocode}

\begin{theorem}
With a Deep (Oja) Network, the previously presented lower-bound of the generalization error becomes an upper-bound.
\end{theorem}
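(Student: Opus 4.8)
The plan is to turn the shallow analysis on its head, exploiting the non-expansiveness of each residual step together with the fact that, with enough depth, the training residuals are driven exactly to zero. First I would record that by the exponential-decrease theorem, after $L$ layers $||R^{(L)}_n||^2 = ||x_n||^2\prod_{l=1}^{L}\bigl(1-\cos^2\theta(R^{(l-1)}_n,\phi^{(l)}_\kappa)\bigr)$, and that choosing $L$ large enough (and the per-layer dictionaries rich enough, exactly the shallow ``sufficient filters'' hypothesis applied layerwise to the residuals $R(k)$) forces $R^{(L)}_m=0$ for every training index $m$: each training point is exactly recovered by its own chain of peeled-off components, with the $\phi^{(l)}_\kappa$ obtained from the PCA step of Eq.~\ref{eq_pca} on the residual clusters.

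Next I would fix a test point $x_{new}$ and let $m^\star=\argmax_m\cos\theta(x_m,x_{new})$ be its nearest training example in angle. The heart of the argument is a coupling between the forward pass of $x_{new}$ and that of $x_{m^\star}$: each layer applies $R\mapsto R-\frac{\langle R,\phi^{(l)}_\kappa\rangle}{||\phi^{(l)}_\kappa||^2}\phi^{(l)}_\kappa$, which is the orthogonal projection onto $(\phi^{(l)}_\kappa)^\perp$ and hence $1$-Lipschitz. Provided both streams select the \emph{same} atom $\phi^{(l)}_\kappa$ at layer $l$, we get $||R^{(l)}_{new}-R^{(l)}_{m^\star}||\le||R^{(l-1)}_{new}-R^{(l-1)}_{m^\star}||$; iterating from $l=1$, where the difference is $||x_{new}-x_{m^\star}||$, down to $L$ and using $R^{(L)}_{m^\star}=0$ gives $\mathcal{L}_{new}^2=||R^{(L)}_{new}||^2\le||x_{new}-x_{m^\star}||^2$, which in the unit-norm normalization used throughout Section~2 reads $E_{new}\le||x_{new}||^2\bigl(1-\cos^2\theta(x_{m^\star},x_{new})\bigr)$ up to constants. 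This is precisely the proximity quantity that lower-bounded the shallow error, now appearing as an upper bound.

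To discharge the ``same atom'' proviso I would invoke a margin/stability condition. The selection $\kappa=\argmax_k|\langle R,\phi^{(l)}_k\rangle|/(||R||\,||\phi^{(l)}_k||)$ is locally constant away from ties, so if $||x_{new}-x_{m^\star}||$ is smaller than the smallest layerwise selection margin encountered along the training chain of $x_{m^\star}$, an induction on $l$ shows $x_{new}$ follows the identical atom sequence, and the contraction chain above is valid. If the training set samples the data manifold finely enough this holds for every test point and the bound becomes global, which is the precise content of the informal claim that depth replaces the shallow memorization lower bound by a genuine generalization guarantee.

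The main obstacle I expect is exactly this atom-selection consistency: the projection steps are only \emph{non}-expansive, not strictly contractive, so there is no slack to absorb a mismatch, and a single layer at which $x_{new}$ and $x_{m^\star}$ fall on different atoms breaks the telescoping. Controlling it requires either the margin hypothesis above, a quantitative perturbation bound on how fast $R^{(l-1)}_{new}$ can drift relative to $R^{(l-1)}_{m^\star}$ before crossing a decision boundary, or an averaging argument over the combinatorially many atom-selection paths; identifying the weakest sufficient condition, phrased in terms of training-set density and the spectral gaps of the per-cluster covariance matrices in Eq.~\ref{eq_pca}, is the real work. A secondary, cosmetic point is reconciling the Euclidean-distance bound produced by the coupling with the $\cos\theta$ proportionality as stated, which is immediate once everything is renormalized to unit norm.
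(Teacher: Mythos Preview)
The paper gives no proof for this theorem. It is stated immediately after the two pseudocode blocks and followed directly by the sentence ``In addition of guaranteeing better generalization errors through depth, we also benefit from another gain,'' with no intervening argument. The preceding shallow ``lower-bound'' theorem was likewise dispatched in one informal line (``The proof is straightforward as we know the network is able to perfectly reconstruct the training set and only the training set''), so the claim you are addressing is asserted rather than demonstrated.

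Your proposal therefore goes well beyond the paper. The coupling via the $1$-Lipschitz property of the orthogonal projection $R\mapsto R-\frac{\langle R,\phi\rangle}{||\phi||^2}\phi$ is the natural mechanism, and you correctly isolate atom-selection consistency as the crux. One refinement worth noting: the margin-based induction is more delicate than it first appears, because as $l$ grows the training residual $R^{(l-1)}_{m^\star}$ shrinks toward zero while the coupled gap $||R^{(l-1)}_{new}-R^{(l-1)}_{m^\star}||$ is only non-increasing; once the former drops below the latter, the direction of $R^{(l-1)}_{new}$ is no longer controlled by that of $R^{(l-1)}_{m^\star}$, so a fixed initial margin cannot propagate through all $L$ layers. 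A cleaner variant is to stop the coupling at the first mismatch layer $l^\star$, use $||R^{(l^\star-1)}_{new}||\le ||R^{(l^\star-1)}_{m^\star}||+||x_{new}-x_{m^\star}||$, and then invoke the monotone decrease of $||R^{(l)}_{new}||$ in $l$ thereafter; this yields the desired upper bound up to an additive $||R^{(l^\star-1)}_{m^\star}||$, which is small precisely when the paths diverge late. Either way some structural hypothesis is unavoidable, and since the paper supplies none, your outline already constitutes more of a proof than the source contains.
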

In addition of guaranteeing better generalization errors through depth, we also benefit from another gain. The depth as we will see allows for an exponential amount of possible templates to be constructed perfectly with only a linear increase in the number of learned parameters.

\begin{lstlisting}[caption=Input to Mask]
####################
#  INPUT: X(N,channels,Ix,Jx),w(n_filters,channels,Iw,Jw)
####################
k    = T.nnet.conv.conv2d(x,w,stride=stride,
        border_mode='valid',flip_filters=False,input_shape=(N,channels,Ix,Jx),
        filters_shape=(n_filters,channels,Iw,Jw))#(N,n_filters,(Ix-Iw)/stride+1,(Jx-Jw)/stride+1)
output = ((k>0)*2-1)*T.signal.pool.max_pool_2d_same_size(
		theano.tensor.abs_(k).dimshuffle([0,2,3,1]),
		(1,n_filters)).dimshuffle([0,3,1,2])#(N,n_filters,(Ix-Iw)/stride+1,(Jx-Jw)/stride+1)
mask = T.switch(T.eq(output,0),0,1)#(N,n_filters,(Ix-Iw)/stride+1,(Jx-Jw)/stride+1)
\end{lstlisting}

\begin{lstlisting}[caption=Reconstruction]
####################
# INPUTS: Z(N,n_filters,Iz,Jz),w(n_filters,channels,Iw,Jw),stride
####################
dilated_output = T.set_subtensor(T.zeros((N,n_filters,(Iz-1)*stride),(Iz-1)*stride),
		dtype='float32')[:,:,::stride,::stride],Z)#(N,n_filters,Ix-Iw+1,Jx-Jw+1)
rec = T.nnet.conv.conv2d(dilated_Z,w.dimshuffle([1,0,2,3]),stride=1,
        border_mode='full',flip_filters=False)#(N,channels,Ix,Jx)
\end{lstlisting}

\begin{lstlisting}[caption=Mask to Grad]
###################
#  INPUT : rec(N,C,Ix,Jx),mask(N,n_filters,Iz,Jz),Iw,Jw
###################
d_W,outputs=theano.scan(fn=lambda acc,i,X,mask:
        acc+conv2d(rec[i].dimshuffle([0,'x',1,2]),mask[i].dimshuffle([0,'x',1,2]),
        input_shape=(C,1,Ix,Jx),
        filter_shape=(n_filters,1,Iz,Jz)).dimshuffle([1,0,2,3]),
        sequences=[theano.tensor.arange(N,dtype='int32')],
        non_sequences=[rec,mask],outputs_info = T.zeros((n_filters,C,Iw,Jw),dtype='float32'))
d_W = d_W[-1]
\caption{algo}
\end{lstlisting}

\begin{figure}[h]
\centering
\includegraphics[width=5in]{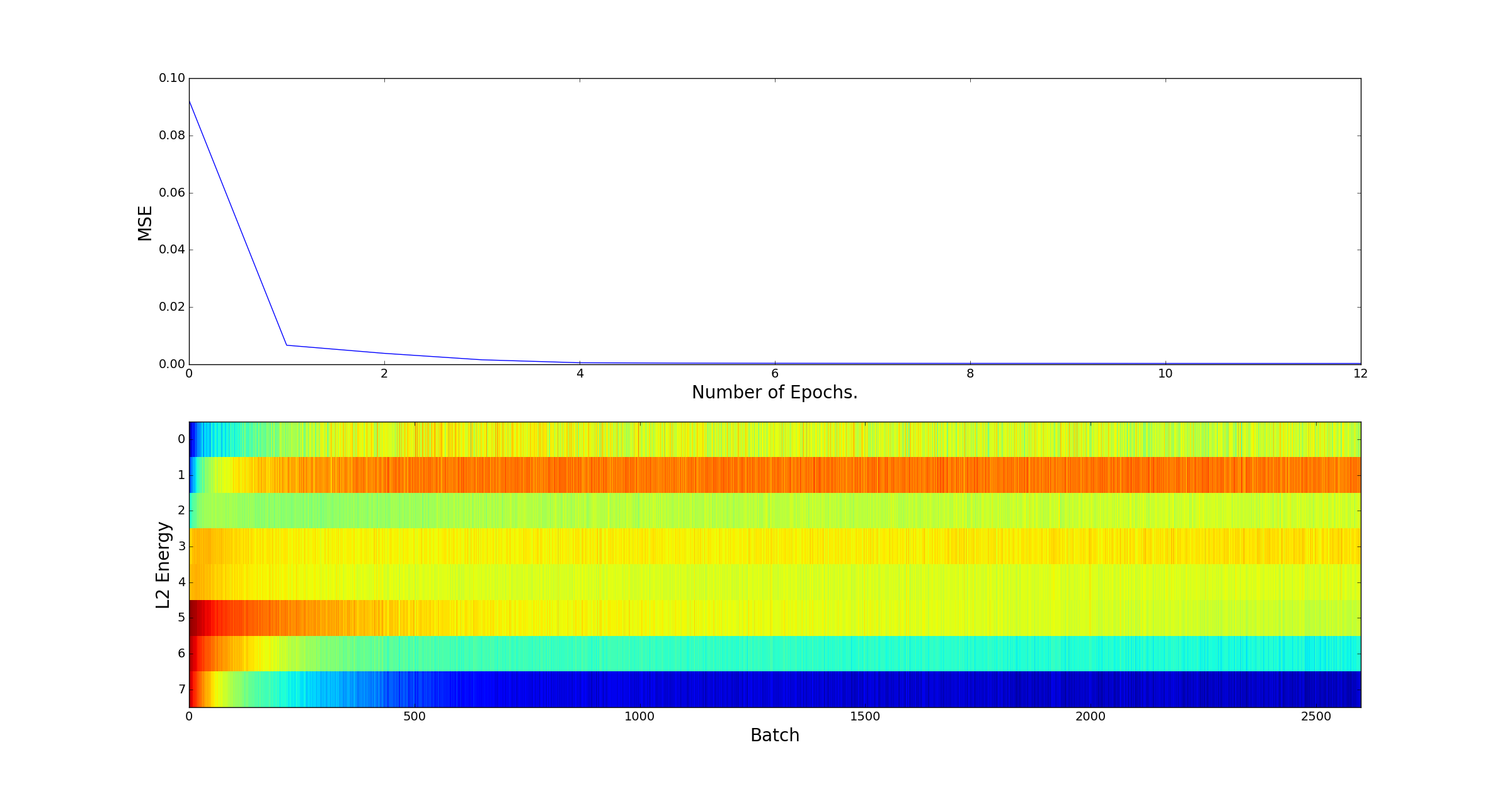}
\caption{Top : evolution of the reconstruciton error w.r.t. epochs. Bottom: evolution of the energy captured per level during training. At first stage the last levels capture everything since random initialization makes the global filters almost orthogonal to images, during training global filters learn to capture the low frequencies. Since it is known that natural images have a $1/f$ decay of energy over frequencies $f$ we can see that the final energy repartition is indeed bigger for low-frequency/global filters and go down for smaller filters.}
\end{figure}
\begin{figure}[h]
\centering
\includegraphics[width=5in]{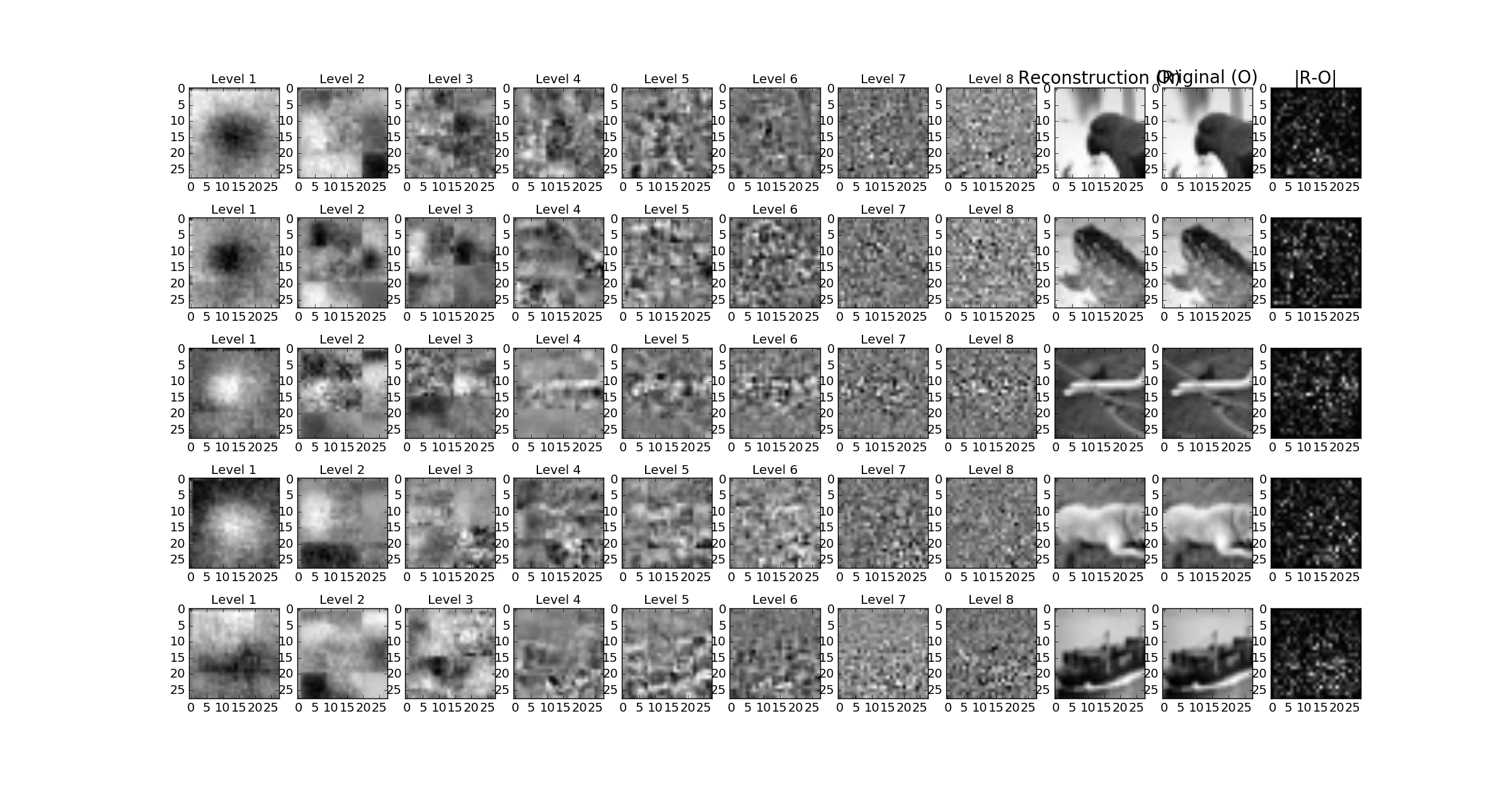}
\caption{Example of decomposition and reconstruction of some CIFAR10 images. From right to left is the final residual (reconstruction minus original), the original image, the reconstructed images and then all the decompositions, on the left is the global/large one. Summing elementwise columns $1$ to $8$ leads to column $9$ the reconstrued input.}
\end{figure}

\subsection{Previous Work}
The proposedm ethod can be seen as bilinear sparse coding with one-hot latent vector $y$ \cite{grimes2005bilinear} for the case of only one filter used. There is also a direct link with the probabilistic version of this work, namely mixture of PPCA \cite{tipping1999probabilistic,tipping1999mixtures}, as here we are in a ''hard clustering'' case similarly to k-means versus GMM. 
By the selection of the best matching atom, we find some links with matching pursuit \cite{tropp2007signal,pati1993orthogonal} and also  locality sensitive hashing \cite{indyk1998approximate,johnson1984extensions} especially in the  cosine similarity distance.

This problem can also be seen from a best basis selection point of view coupled with dictionary learning. 
Popular examples with varying degrees of computational overhead include convex relaxations such
as $L1$-norm minimization \cite{beck2009fast,candes2006robust,tibshirani1996regression},  greedy approaches like orthogonal matching pursuit (OMP)
\cite{pati1993orthogonal,tropp2004greed}, and many flavors of iterative hard-thresholding (IHT) \cite{blumensath2009iterative,blumensath2010normalized}
Variants of these algorithms find practical relevance in numerous disparate domains, including feature selection \cite{cotter2002sparse,figueiredo2002adaptive}, outlier removal \cite{candes2005decoding,ikehata2012robust}, compressive sensing \cite{baraniuk2007compressive}, and source localization \cite{baillet2001electromagnetic,model2006signal}

\section{Conclusion}
We presented a hierarchical version of the deterministic mixture of PCA and presented results on CIFAR10 images. We also provided algorithms allowing GPU computation for large scale dataset and speed. The main novelty comes from the deterministic formulate of the probabilistic mixture of PCA which allows easier use as it is known in general that MPPCA is unstable for large scale problems. From this we derived its hierarchical residual version which inherits many benefits and allow for exponentially good reconstruction w.r.t. the depth. We also believe that this residual approach allowing to learn orthogonal spaces will lead to interesting dictionary learning mixing for example residual networks with this approach.
\bibliography{ref}
\bibliographystyle{plain}

\end{document}